\def\tsc#1{\csdef{#1}{\textsc{\lowercase{#1}}\xspace}}
\newtheorem{lemma}{Lemma}
\DeclareMathOperator{\argmin}{argmin}
\DeclareMathOperator{\nnz}{nnz}
\begin{document}
\let\WriteBookmarks\relax
\def\floatpagepagefraction{1}
\def\textpagefraction{.001}
\shorttitle{Orthogonal NMF with the Kullback-Leibler divergence}
\shortauthors{Nkurunziza, Nahayo \& Gillis}

\title [mode = title]{Orthogonal Nonnegative Matrix Factorization with the Kullback-Leibler divergence}

\author[1]{Jean Pacifique Nkurunziza}[orcid=0009-0002-6595-6778]
\fnmark[1]
\ead{pacifiquenkuru997@gmail.com}

\author[1]{Fulgence Nahayo}[orcid=0000-0002-6540-7851]
\ead{fulgence.nahayo@ub.edu.bi}

\author[2]{Nicolas Gillis}[orcid=0000-0001-6423-6897]
\cormark[1] 
\ead{nicolas.gillis@umons.ac.be}
\ead[URL]{https://sites.google.com/site/nicolasgillis/}


\affiliation[1]{organization={LURMISTA/ISTA, University of Burundi},
                addressline={Boulevard de l'Indépendance 11}, 
                city={Bujumbura},
                postcode={6934}, 
                country={Burundi}}

\affiliation[2]{organization={University of Mons},
                addressline={Rue de Houdain 9}, 
                postcode={7000}, 
                postcodesep={}, 
                city={Mons},
                country={Belgium}}

\cortext[cor1]{Corresponding author}
\fntext[fn1]{JPS is associated to the National Doctoral School of Burundi.}

\begin{abstract}
Orthogonal nonnegative matrix factorization (ONMF) has become a standard approach for clustering. As far as we know, most works on ONMF rely on the Frobenius norm to assess the quality of the approximation. This paper presents a new model and algorithm for ONMF that minimizes the Kullback-Leibler (KL) divergence. As opposed to the Frobenius norm which assumes Gaussian noise, the KL divergence is the maximum likelihood estimator for Poisson-distributed data, which can model better sparse vectors of word counts in document data sets and photo counting processes in imaging. We develop an algorithm based on alternating optimization, KL-ONMF, and show that it performs favorably with the Frobenius-norm based ONMF for document classification and hyperspectral image unmixing. 
\end{abstract}
\begin{keywords} 
orthogonal nonnegative matrix factorization \sep   clustering \sep  alternating optimization \sep   document classification \sep  hyperspectral imaging. 
\end{keywords}











\maketitle

\section{Introduction}			

Given a data matrix $X \in \mathbb{R}^{m\times n}$ where each column, $X(:,j) \in \mathbb{R}^{m}$ for $j=1,2,\dots,n$, corresponds to a data points, and a factorization rank $r$, orthogonal nonnegative matrix factorization (ONMF)~\cite{ding2006orthogonal} requires to find $W\in \mathbb{R}^{m\times r}$ and $H \in \mathbb{R}^{r \times n}$ such that 
\[
X \approx WH, \quad H \geq 0, \quad \text{ and } HH^\top = I_r, 
\]
where $I_r$ is the $r$-b-$r$ identity matrix. 
The constraints on the matrix $H$ make ONMF a clustering problem~\cite{ding2006orthogonal}: if a matrix $H$ is component-wise nonnegative (that is, $H \geq 0$) and orthogonal (that is, $HH^\top = I_r$), then $H$ has at most a single non-zero  entry in each column (that is, in each $H(:,j)$). Hence each data points is approximated using a single column of $W$. In fact, let us denote $\mathcal K_k$ the set that contains the non-zero indices of the $k$th row of $H$, that is, $\mathcal K_k = \{ j \ | \ H_{k,j} \neq 0\}$. We have 
\[
X(:,j) \approx W(:,k) H_{k,j} \quad \text{ for all }  \quad j \in \mathcal K_k. 
\]
The data points are separated into $r$ clusters, $\{\mathcal K_k\}_{k=1}^r$, whose centroids are given by the columns of $W$. Standard $k$-means (and variants that use other distances than the squared Euclidean distance~\cite{banerjee2005clustering}) approximates each data point with the cluster centroid, that is, $X(:,j) \approx W(:,k)$ for $j \in \mathcal K_k$. 
ONMF approximates each data point as a \emph{multiple} of each centroid, and hence the angle between the centroids and the data points plays a more important role than their distances. In fact, ONMF was shown to be equivalent to a weighted variant of spherical $k$-means~\cite{pompili2014two}; see also~\cite[pp.~188-189]{gillis2020nonnegative}. 
Note that (i)~$X$ and $W$ are not required to be nonnegative, although this is often the case in practice, and (ii)~if $X$ is nonnegative, then the optimal $W$ also is. 

ONMF has attracted a lot of attention and has been shown to perform well for various clustering tasks; see~\cite{ding2006orthogonal, Choi08algorithmsfor, YooChoiOrthogonal2008, Yang10linearand, li2014two, pompili2014two, mirzal2014convergent, asteris2015orthogonal, wang2019clustering} and the references therein.
As far as we know, most previous works on ONMF focused on the Frobenius norm to assess the quality of the approximation, that is, they focused on the following optimization problem: given $X \in \mathbb{R}^{m\times n}$ and $r$,  solve 
\[
 \min_{W\in \mathbb{R}^{m\times r}, H \in \mathbb{R}^{r \times n}} 
 \, \| X - WH \|_F^2 
  \text{ s.t. }   H \geq 0 \text{ and } HH^\top = I_r, 
\]
where $\|X\|_F^2 = \sum_{i,j} X_{i,j}^2$ is the squared Frobenius norm. The only exceptions we were able to find are the papers~\cite{li2007non, kimura2016column} in which the authors consider the KL and Bregman divergences but they used a regularization term to promote orthogonality, leading to soft clusterings; in this paper we impose   orthogonality as a hard constraint.   
The underlying assumption when using the Frobenius norm is that the noise follows a Gaussian distribution. 
However, in situations when Gaussian noise is not meaningful, other objective functions should be used. In particular, and this will be the focus of this paper, if the entries of the data matrix follow a Poisson distribution of parameter given by $WH$, then one should minimize the Kullback-Leibler (KL) divergence between $X$ and $WH$, 
which is defined as   
\[
D_{\text{KL}}(X,WH) = \sum_{i,j} D_{\text{KL}}(X_{i,j},(WH)_{i,j}), 
\]
where $D_{\text{KL}}(x,y) = y - x + x \log \frac{x}{y}$, 
and, by convention, $D_{\text{KL}}(0,y) = y$, while the KL divergence is not defined for $x < 0$. 
The KL divergence is for example much more meaningful for sparse document data sets where each column of $X$ is a vector of word count~\cite{lee1999learning}, and in some imaging applications~\cite{richardson1972bayesian, lucy1974iterative}.

\paragraph{Outline and contribution of the paper} 

In this paper, we propose an algorithm for 
ONMF with the KL divergence, using alternating optimization. As for the Frobenius norm which is described in Section~\ref{sec:FroONMF}), we will be able to derive closed-form updates, and hence devise a simple, yet effective, algorithm for ONMF with the KL divergence, which we will refer to as KL-ONMF; see Section~\ref{sec:KLonmf}. We will show that it compares favorably with Fro-NMF for document classification and hyperspectral image unmixing; see Section~\ref{sec:numexp}.

\paragraph{Notation} The matrix $X \in \mathbb{R}^{m\times n}$ is a real  $m$-by-$n$ matrix, its $j$th column is denoted $X(:,j)$, its $i$th row by $X(i,:)$, its entry at position $(i,j)$ by $X_{i,j}$, its transpose by $X^\top$, and its number of non-zero entries by $\nnz(X)$. 
Given sets of indices $\mathcal{I}$ and $\mathcal{J}$, $X(\mathcal{I},\mathcal{J})$ denotes the submatrix of $X$ with  row (resp.\ column) indices in $\mathcal{I}$ (resp.\ $\mathcal{J}$). The vectors $e$ is the vector of all ones of appropriate dimension, and $I_r$ is the identity matrix of dimension $r$.

\section{Alternating optimization for 
ONMF with the Frobenius norm} \label{sec:FroONMF}

Many algorithms have been developed for ONMF. Let us recall a simple and effective one, based on alternating optimization proposed in~\cite{pompili2014two}, which we refer to as Fro-ONMF.  It follows the same scheme as other clustering algorithms, in particular $k$-means that alternatively updates the centroids and the clusters; see also, e.g., \cite{banerjee2005clustering} for generalizations to any Bregman divergences. 
Fro-ONMF 
 updates $W$ and $H$ alternatively with closed-form expressions which can be derived as follows.  

\noindent $\bullet$ For $H$ fixed, since $W$ is unconstrained, the optimal solution must have its gradient equal to zero: 
  \[
2 (WH-X)H^\top = 0 \quad \Rightarrow \quad W =  XH^\top, 
  \]
  since $HH^\top = I_r$. It is interesting to interpret this closed-form expression. 
Since $H$ is orthogonal, there is a single non-zero entry per column, and recall $\mathcal K_k$ is the set that contains the non-zero indices of the $k$th row of $H$. 
We have 
\begin{equation} \label{eq:updateWfro}
W(:,k) 
=
X(:,\mathcal K_k) H(k,\mathcal K_k)^\top 
= 
\sum_{j \in \mathcal K_k} X(:,j) H_{k,j}. 
\end{equation} 
This means that $W(:,k)$ is a \emph{weighted} average of the data points (that is, the columns of $X$) belonging to the $k$th cluster corresponding to the $k$th row of $H$. 
The weight for each data point will depend on its norm since we will see that $H_{k,j}$ is equal to $\frac{X(:,j)^\top W(:,k)}{\|W(:,k)\|_2^2}$.  

\noindent $\bullet$  For $W$ fixed, 
  assume we know the position where $H(:,j)$ is different from zero, say $H_{k,j} \neq 0$. We must minimize $  \| X(:,j) - W(:,k) H_{k,j}\|_2^2 $ which is equal to 
  \[ 
   \| X(:,j) \|_2^2  - 2 X(:,j)^\top W(:,k) H_{k,j} + H_{k,j}^2 \|W(:,k)\|_2^2. 
  \] 
  If $X(:,j)^\top W(:,k) \geq 0$ (this will always hold when \mbox{$X \geq 0$} and $W \geq 0$), 
the optimal $H_{k,j}^* = \frac{X(:,j)^\top W(:,k)}{\|W(:,k)\|_2^2}$, otherwise, $H_{k,j}^* = 0$.  
Hence  $\| X(:,j) - W(:,k) H^*_{k,j}\|_2^2$  is equal to 
 \mbox{$\| X(:,j) \|_2^2 - \left(\frac{X(:,j)^\top W(:,k)}{\|W(:,k)\|_2}\right)^2$}. 
  Therefore, the non-zero entry of $H(:,j)$ will be the entry $k$ that maximizes  $\frac{W(:,k)^\top X(:,j)}{\|W(:,k)\|_2}$. 
  Hence, to update $H$, we first normalize $W$ so that its columns have unit $\ell_2$ norm, then   
  compute $W^\top X \in \mathbb{R}^{r \times n}$ and the maximum in each column will correspond to the non-zero entry in each column of $H$. Finally we update these entries with the above closed-form expression.  
  
  Note that, after this update, the rows of $H$ might not have norm 1; which is required by the constraint $H H^\top = I_r$. This can be fixed, without changing the objective function value, by rescaling the solution $WH$ since there is a scaling degree of freedom: for any $\alpha > 0$, 
  $W(:,k)H(k,:)$ $=$ \mbox{$(\alpha W(:,k))(H(k,:)/\alpha)$}. 
  If $W$ is updated after $H$, the scaling of $W$ is not necessary since it will be automatically scaled with the optimal closed-form expression provided in~\eqref{eq:updateWfro}.

Algorithm~\ref{alg:froONMF} summarizes alternating optimization for ONMF with the Frobenius norm, which we refer to as Fro-ONMF. 
Note that $X$ and $W$ do not need to be nonnegative, ONMF in the Frobenius norm can be used to cluster data points with negative entries. 

\begin{algorithm}[H]
	\caption{Fro-ONMF - alternating optimization for ONMF with the Frobenius norm~\cite{pompili2014two}} \label{alg:froONMF} 
	\begin{algorithmic}[1]
		\REQUIRE{Data matrix: $X\in \mathbb{R}^{m\times n}$, 
  Initialization: $W \in \mathbb{R}^{m \times r}$, 
  factorization rank: $r \ll n$, maximum number of iterations: maxiter, convergence criterion: $\delta \ll 1$.}
  
		\ENSURE{Matrices $W\in \mathbb{R}^{m\times r}$ and $H\in \mathbb{R}^{r\times n}_{+}$ with \mbox{$HH^\top = I_r$} such that   $\|X-WH\|_F^2$ is minimized.}

		\STATE $t=1$, $H = 1$, $H^{(p)} = 0$. 

		\WHILE {$t \leq$ maxiter and $\|H-H^{(p)}\|_F \geq \delta$} 
             
             \STATE  \emph{\% Update $H$ for $W$ fixed} 

        \STATE   $H^{(p)} \leftarrow H$. \emph{\% $H^{(p)}$ is the previous iterate to monitor convergence.}

\STATE Normalized $W$: $W_n(:,k) = W(:,k)/\|W(:,k)\|_2$  for all $k$. 
        
         \STATE Compute $A = W_n^\top X \in \mathbb{R}^{r \times n}$.  
         
         \STATE Let 
         $\mathcal{K}_k = \{ j \ | \ A(k,j) > A(k',j) \text{ for all } k' \neq k\}$. 

		\STATE Update $H(k,j) \leftarrow  \frac{W(:,k)^\top X(:,j)}{\|W(:,k)\|_2^2}$ for all $k$ and $j \in \mathcal{K}_k$.

   \STATE Scale $H$: $H(k,:) \leftarrow \frac{H(k,:)}{\|H(k,:)\|_2}$ for all $k$. 
   
		 \STATE  \emph{\% Update $W$ for $H$ fixed} 
 
		\STATE $W(:,k) \leftarrow X(:,\mathcal K_k)  H(k,\mathcal K_k)^\top$ for all $k$.  

            \STATE $t \leftarrow t+1$
		\ENDWHILE 
	\end{algorithmic}
\end{algorithm}

Let us discuss three important aspects of Algorithm~\ref{alg:froONMF}. 

\paragraph{Computational cost} The main cost of Algorithm~\ref{alg:froONMF} is the matrix-matrix product $W_n^\top X$ which requires $O(\nnz(X)r)$ operations, where $\nnz(X)$ is the number of non-zero entries of $X$ (it is equal to $mn$ if $X$ is dense). 
The other operations requires $O(\nnz(X))$ operations or less, assuming $\nnz(X) \gg \max(m,n) r$ (which should be the case otherwise the number of non-zero entries of $X$ is smaller than the number of parameters in ONMF). This makes Algorithm~\ref{alg:froONMF} very fast and scalable.  

Note that the update of $W$ can be computed directly as $W \leftarrow XH^\top$. However, if using a dense matrix to represent $H$, this would require $O(\nnz(X)r)$ operations, which would make this step $r$ times slower\footnote{In fact, we have improved the implementation of Fro-ONMF that used to update $W$ using $XH^\top$ which required $O(\nnz(X)r)$ operations, hence making the cost per iteration almost double.}. 

\paragraph{Convergence} By construction, the objective function, $\|X - WH\|_F^2$, which is bounded below, decreases at each step, and hence objective function values converge. 
Since the feasible set for $H$ is compact, and the level sets are compact (because $\|X - WH\|_F^2$ is coercive in $W$ as $H$ is normalized with non-zero rows, meaning that $\|X - WH\|_F^2$ goes to infinity as any entry of $W$ goes to infinity), there exists a converging subsequence of the iterates.

\paragraph{Stopping criterion} Since $H$ is normalized ($\|H(k,:)\|_2 = 1$ for all $k$), it makes sense to use the (cheap) stopping criterion $||H-H^{(p)}||_F < \delta$, where $\delta$ is a parameter smaller than 1. 
We will use $\delta = 10^{-6}$. We will also use a maximum of 100 iterations (which is never reached in our numerical experiments).

Another important aspect of ONMF algorithms is the initialization of $W$ which will be discussed in Section~\ref{sec:numexp}.

\section{Alternating optimization for 
ONMF with the KL divergence} \label{sec:KLonmf}

As explained in the introduction, we focus in this paper on ONMF with the KL divergence, which can be formulated as follows: given $X \in \mathbb{R}^{m\times n}$ and $r$,  
solve  
\begin{equation} \label{eq:KL-ONMF}
 \min_{W\in \mathbb{R}^{m\times r}, H \in \mathbb{R}^{r \times n}} 
 \, D_{\text{KL}} (X , WH ) 
  \text{ s.t. }   H \geq 0 \text{ and } HH^\top = I_r. 
\end{equation}
Note that, as opposed to ONMF with the Frobenius norm, the KL divergence requires $X$ to be component-wise nonnegative, because the KL divergence is defined only for $X \geq 0$. 

In the next two sections, we detail the closed-form expressions for $W$ when $H$ is fixed in~\eqref{eq:KL-ONMF}, and vice versa.

  \subsection{Update $W$ when $H$ is fixed}

 Since $H$ is orthogonal and non-negative, each column of $W$ needs only to be optimized over a subset of the columns of $X$, as for Fro-ONMF. 
 Recall the notation $\mathcal{K}_k = \{ j | H_{k,j} > 0\}$. 
 To find the optimal $W(:,k)$, we need to solve the following problem: 
	\begin{equation}\label{eq24_2_1}
		\min\limits_{W(:,k)} \sum\limits_{j \in \mathcal{K}_k} D_{\text{KL}}(X(:,j), W(:,k) H_{k,j} ).  
	\end{equation} 
\begin{lemma}\label{lem2}
    The optimal solution of the problem \eqref{eq24_2_1} is given by 
\[ 
W^*(:,k) = \frac{X(:,\mathcal K_k) e}{H(k,:) e}. 
\] 
\end{lemma}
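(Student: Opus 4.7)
The plan is to exploit the separability of the KL divergence over rows. Because the vector variable $W(:,k)$ interacts with the data only through products $W_{i,k} H_{k,j}$ (row $i$ only influences row $i$), the objective in~\eqref{eq24_2_1} decomposes as a sum over $i=1,\dots,m$ of scalar functions $f_i(W_{i,k})$. I would therefore treat each entry $W_{i,k}$ as an independent one-dimensional unconstrained optimization problem, solve it in closed form, and then reassemble the result in vector form.

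Concretely, for a fixed index $i$, expanding the definition $D_{\text{KL}}(x,y)=y-x+x\log\frac{x}{y}$ gives
\[
f_i(w) \;=\; \sum_{j\in\mathcal{K}_k}\!\Bigl[w H_{k,j}-X_{i,j}+X_{i,j}\log\tfrac{X_{i,j}}{w H_{k,j}}\Bigr]
\;=\; w\,\sum_{j\in\mathcal{K}_k}\!H_{k,j} \;-\;\Bigl(\sum_{j\in\mathcal{K}_k}\!X_{i,j}\Bigr)\log w \;+\;\text{const},
\]
where the constant collects all terms independent of $w$. This is of the form $aw-b\log w$ with $a=\sum_{j\in\mathcal{K}_k}H_{k,j}>0$ and $b=\sum_{j\in\mathcal{K}_k}X_{i,j}\geq 0$, and so is strictly convex on $w>0$ (its second derivative $b/w^2$ is nonnegative, and the linear term dominates as $w\to\infty$, while $-b\log w\to+\infty$ as $w\to 0^+$ when $b>0$).

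Setting $f_i'(w)=0$ gives immediately
\[
w^{\,*} \;=\; \frac{\sum_{j\in\mathcal{K}_k}X_{i,j}}{\sum_{j\in\mathcal{K}_k}H_{k,j}} \;=\; \frac{X(i,\mathcal{K}_k)\,e}{H(k,:)\,e},
\]
using that $H_{k,j}=0$ for $j\notin\mathcal{K}_k$ so the denominator can equivalently be written as $H(k,:)e$. Stacking these row-wise solutions produces the claimed vector formula $W^{*}(:,k)=\frac{X(:,\mathcal{K}_k)\,e}{H(k,:)\,e}$.

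The only subtlety worth addressing is the corner case $b=\sum_{j\in\mathcal{K}_k}X_{i,j}=0$: then $f_i(w)=aw+\text{const}$ is minimized as $w\downarrow 0$, and the formula $w^{\,*}=0$ is still the unique limiting minimizer (and consistent with the convention $D_{\text{KL}}(0,y)=y$ recalled in the introduction). A second minor point to mention is that $H(k,:)e>0$ strictly, since $\mathcal{K}_k$ is nonempty and the nonzero entries of $H$ are positive — otherwise the $k$th cluster is empty and $W(:,k)$ is irrelevant. Beyond these remarks, the proof is just the one-line first-order calculation above, so I do not anticipate any real obstacle.
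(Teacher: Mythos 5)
Your proof is correct and follows essentially the same route as the paper: both obtain $W^*(:,k)$ from the first-order optimality condition of the separable KL objective, the paper working with the matrix gradient formula cited from Lee--Seung and simplifying it using the single-nonzero-per-column structure of $H$, while you derive the same stationarity condition entrywise. Your extra checks (strict convexity of $aw-b\log w$ on $w>0$, the $\sum_{j\in\mathcal{K}_k}X_{i,j}=0$ corner case, and $H(k,:)e>0$) are sound and, if anything, slightly more careful than the paper's argument, which simply sets the gradient to zero and remarks that the unconstrained solution is nonnegative.
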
 
\begin{proof} 
The derivative w.r.t.\ $W(:,k)$ of $D_{\text{KL}}(X,WH)$ is given by~\cite{lee1999learning} 
\[
e e^\top H(k,:)^\top - \frac{[X]}{[WH]} H(k,:)^\top. 
\]
Using the structure of \eqref{eq24_2_1} to simplify the above expression, and setting the gradient to zero for
 optimality (as we will see, the unconstrained solution is nonnegative as long at $X$ and $H$ are), we obtain  that $e e^\top  H(k,:)^\top $ is equal to 
\begin{align*}
e e^\top  H(k,\mathcal K_k)^\top 
& = 
\frac{[X(:,\mathcal K_k)]}{[W(:,k)H(k,\mathcal K_k)]}  H(k,\mathcal K_k)^\top  \\ 
& = 
\frac{[X(:,\mathcal K_k) e]}{[W(:,k)]} . 
\end{align*} 
Since $e^\top  H(k,\mathcal K_k)^\top = H(k,\mathcal K_k) e$,  this gives 
$W^*(:,k) = \frac{X(:,\mathcal K_k) e}{H(k,\mathcal K_k) e}$ at optimality.  
\end{proof}

It is interesting to note that the optimal $W(:,k)$ is a scaled average of the columns of $X$ that are associated with the cluster $k$ defined by the $k$th row of $H$. This is rather different than Fro-ONMF where $W(:,k)$ is a weighted average where the weights depend linearly on the norm of the data points;  see the discussion around Equation~\eqref{eq:updateWfro} in Section~\ref{sec:FroONMF}. This implies that the centroids depend \emph{quadratically} on the norms of the data points. 
Hence KL-ONMF will be less sensitive to outliers, and will give more importance to data points with smaller norm, as opposed to Fro-ONMF. The reason is that the KL divergence KL$(x,y)$ grows linearly with $y$ for $y$ sufficiently large (as opposed to quadratically for the Frobenius norm).  
This will be illustrated in the numerical experiments; in particular for hyperspectral images where KL-ONMF will be able to identify materials with small spectral signatures, while Fro-ONMF will not be able to do so.

  \subsection{Update $H$ when $W$ is fixed}

As for the Frobenius norm, let us first assume we know the position at which $H(:,j)$ is non-zero, we will then select the entry that minimizes $D_{\text{KL}} (X(:,j), W(:,k) H_{k,j} )$ the most. 
 \begin{lemma}\label{lem1} Given $X(:,j) \geq 0$ and $W(:,k) \geq 0$, we have 
\[ 
H_{k,j}^* =  
\argmin_{H_{k,j}} D_{\text{KL}} (X(:,j), W(:,k) H_{k,j} ) 
 = \frac{e^\top X(:,j)}{e^\top W(:,k)},  
\] 
where $e$ is the vector of all ones. 
\end{lemma}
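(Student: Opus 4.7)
The plan is to reduce the minimization to a one-dimensional convex problem in the scalar $h := H_{k,j}$ and then solve by setting the derivative to zero. First I would write
\[
D_{\text{KL}}\bigl(X(:,j),\, W(:,k)\, h\bigr) = \sum_i \bigl[\, W_{i,k}\, h \;-\; X_{i,j} \;+\; X_{i,j}\log \tfrac{X_{i,j}}{W_{i,k}\, h}\,\bigr],
\]
expanding the logarithm as $\log X_{i,j} - \log W_{i,k} - \log h$ to isolate the dependence on $h$. The only terms depending on $h$ are $h\sum_i W_{i,k} = h\, e^\top W(:,k)$ and $-\log(h)\sum_i X_{i,j} = -\log(h)\, e^\top X(:,j)$, so the objective has the form $\alpha h - \beta \log h + \text{const}$ with $\alpha = e^\top W(:,k)\ge 0$ and $\beta = e^\top X(:,j)\ge 0$.

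Next I would differentiate in $h$ to obtain $\alpha - \beta/h$, set this equal to zero, and solve for $h^* = \beta/\alpha = e^\top X(:,j)/e^\top W(:,k)$. Since $\alpha h$ is linear and $-\beta\log h$ is convex for $h>0$ (its second derivative $\beta/h^2 \ge 0$), the stationary point is the global minimizer over $h>0$. The solution is automatically nonnegative (in fact strictly positive whenever $X(:,j)\neq 0$ and $W(:,k)\neq 0$), so the nonnegativity constraint on $H$ does not bind and the unconstrained optimum is feasible.

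I do not anticipate any real obstacle here: once the KL divergence is written as a sum and the $h$-dependent part is isolated, the result falls out from a single-variable first-order condition. The only mild subtleties are the edge cases when $e^\top W(:,k) = 0$ (in which case the cluster is degenerate and $W(:,k)$ should be reinitialized) or when $e^\top X(:,j) = 0$ (in which case $h^*=0$ is recovered as a limit, consistent with the convention $D_{\text{KL}}(0,y)=y$), but neither case requires a separate argument for the generic statement of the lemma.
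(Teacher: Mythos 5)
Your proposal is correct and follows essentially the same route as the paper: reduce to a one-variable problem in $h$, isolate the terms $h\,e^\top W(:,k)$ and $-\log(h)\,e^\top X(:,j)$, and solve the first-order condition. Your added remarks on convexity (so the stationary point is the global minimizer) and on the degenerate cases $e^\top W(:,k)=0$ or $e^\top X(:,j)=0$ are slightly more careful than the paper's proof, which stops at the derivative computation.
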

\begin{proof}
Let us consider the problem 
$\min_{\alpha} D_{\text{KL}}(X(:,j) , \alpha W(:,k) )$,  
where $\alpha$ represents $H_{k,j}$. 
We need to minimize \vspace{-0.2cm}  
\begin{align*}
    f(\alpha) 
    = D_{\text{KL}}(X(:,j) , \alpha W(:,k) ) 
& = \alpha e^\top W(:,k)  
- \sum_{i=1}^m X(i,j) \log(\alpha W(i,k)) \vspace{-0.2cm}  \\ 
& = \alpha e^\top W(:,k)  
- e^\top X(:,j) \log \alpha  -  \sum_{i=1}^m X(i,j) \log W(i,k). \vspace{-0.2cm}  
\end{align*} 
The derivative with respect to $\alpha$ is given by 
$f'(\alpha) = e^\top W(:,k)  
- \frac{1}{\alpha} e^\top X(:,j)$. 
Hence the optimal solution is given by  
$H_{k,j}^{\ast} = \alpha^{\ast} 
= \frac{e^\top X(:,j)}{e^\top W(:,k)}$. 
\end{proof}
 This means that $H_{k,j}$ is the ratio between the average entries in the corresponding columns of $X$ and $W$.

\noindent Then, to decide which column of $W$ is best for $X(:,j)$, let us compute the error we would get for each choice of $H_{k,j}^{\ast}$: \vspace{-0.2cm}   
\begin{align*}
    KL(X(:,j), H_{k,j}^* W(:,k)) & 
= 
\frac{e^\top X(:,j)}{e^\top W(:,k)} e^\top W(:,k)
- \sum_{i=1}^m X(i,j) \log\left(\frac{e^\top X(:,j)}{e^\top W(:,k)} W(i,k)\right) \\ 
& = e^\top X(:,j)
- \sum_{i=1}^m X(i,j) \log\left(\frac{  W(i,k) }{e^\top W(:,k)}\right)
- \sum_{i=1}^m X(i,j) \log ( e^\top X(:,j) ).  
\end{align*} 

\noindent The entry of the column of $H(:,j)$ that should be set to a positive value is the one that  minimizes the quantity \mbox{$KL(X(:,j), H_{k,j}^* W(:,k))$}, hence it is the entry corresponding to the index $k$ such that $\sum\limits_{i=1}^m X(i,j) \log\left(\frac{  W(i,k) }{e^\top W(:,k)}\right)$ is maximized. That is, to select the column of $W$ that best approximates  the column $X(:,j)$ in the KL divergence, we pick the column $W(:,k)$ maximizing\footnote{Recall that, by convention, $0 \log 0 = 0$, as KL$(0,y) = y$.} $\sum\limits_{i=1}^m X(i,j) \log\left(\frac{  W(i,k) }{e^\top W(:,k)}\right)$. 
Numerically, we can construct $W_n$ by  normalizing the columns of $W$ to have unit $\ell_1$, and then compute $A:= \log(W_n + \epsilon)^\top X$, where $\epsilon$ is a small constant to avoid numerical errors, and the $\log$ is taken component-wise: the locations of the non-zero entries of the optimal $H$ are given by the maximum of each column of $A$.

\subsection{Algorithm KL-ONMF} \label{sec:algoKLonmf}

Algorithm~\ref{alg:klONMF} summarizes the alternating optimization scheme, which we refer to as KL-ONMF.

\begin{algorithm}[H]
	\caption{KL-ONMF - alternating optimization for ONMF with the KL divergence} \label{alg:klONMF} 
	\begin{algorithmic}[1]
		\REQUIRE{Nonnegative matrix: $X\in \mathbb{R}^{m\times n}_{+}$, 
  Initialization: $W \in \mathbb{R}^{m \times r}_+$, 
  factorization rank: $r$, maximum number of iterations: maxiter, convergence criterion: $\delta \ll 1$, small parameter: $\epsilon \ll 1$.}
  
		\ENSURE{Matrices $W\in \mathbb{R}^{m\times r}_{+}$ and $H\in \mathbb{R}^{r\times n}_{+}$ with $HH^\top = I_r$ such that   $D_{\text{KL}}(X,WH)$ is minimized.}

		\STATE $t=1$, $H = 1$, $H^{(p)} = 0$. 

		\WHILE {$t \leq$ maxiter and $\|H-H^{(p)}\|_F \geq \delta$} 
             
             \STATE  \emph{\% Update $H$ for $W$ fixed} 

        \STATE   $H^{(p)} \leftarrow H$. \emph{\% $H^{(p)}$ is the previous iterate to monitor convergence.}

\STATE Normalized $W_n$: $W(:,k) \leftarrow W(:,k)/e^\top W(:,k)$  for all $k$.  
        
         \STATE Compute $A = \log(W_n + \epsilon)^\top X \in \mathbb{R}^{r \times n}$. 
         
         \STATE Let 
         $\mathcal{K}_k = \{ j \ | \ A(k,j) > A(k',j) \text{ for all } k' \neq k\}$. 

		\STATE Update $H(k,j) \leftarrow  \frac{e^\top X(:,j)}{e^\top W(:,k)}$ for all $k$ and $j \in \mathcal{K}_k$.

   \STATE Scale $H$: $H(k,:) \leftarrow \frac{H(k,:)}{\|H(k,:)\|_2}$ for all $k$. 
   
		 \STATE  \emph{\% Update $W$ for $H$ fixed} 
 
		\STATE $W(:,k) \leftarrow \frac{X(:,\mathcal K_k) e}{H(k,:) e}$ for all $k$.

            \STATE $t \leftarrow t+1$
		\ENDWHILE  
	\end{algorithmic}
\end{algorithm}

Let us discuss some details about Algorithm~\ref{alg:klONMF}. 

\paragraph{Computational cost} 
As for Fro-ONMF (Algorithm~\ref{alg:froONMF}), the main cost is a matrix-matrix product, namely computing $\log(W + \epsilon)^\top X$, which requires $O(\nnz(X)r)$ operations. The other operations requires $O(\nnz(X))$ operations or less, assuming $\nnz(X) \gg \max(m,n) r$ (which should be the case otherwise the number of non-zero entries of $X$ is smaller than the number of parameters in ONMF). 
This makes Algorithm~\ref{alg:klONMF} have essentially the same computational cost as Fro-ONMF, hence being very fast and scalable. 
We have observed in practice that KL-ONMF is slightly faster than Fro-ONMF, the reason is the update of $W$: KL-ONMF only needs to perform $\nnz(X) + nr$ sums to compute $W(:,k) = X(:,\mathcal K_k)e/H(k,:)e$, 
while Fro-ONMF requires $\nnz(X)$ sums and products for the update of $W(:,k)$ $=$ \mbox{$X(:,\mathcal K_k) H(k,K_k)^\top$}. 

\paragraph{Convergence} The same observations as for Fro-ONMF apply: the objective function values will converge while there is a converging subsequence of iterates.

\paragraph{Stopping criterion} As for Fro-ONMF, we use the stopping criterion $||H-H^{(p)}||_F < \delta \ll 1$, and we will use $\delta = 10^{-6}$. We will also use a maximum of 100 iterations, which is never reached in our numerical experiments. The parameter $\epsilon$ that allows to take the $\log$ of $W$ is set to $10^{-3}$.

\section{Numerical Experiments} \label{sec:numexp}

In this section, we compare the performance of Fro-ONMF (Algorithm~\ref{alg:froONMF})  and KL-ONMF (Algorithm~\ref{alg:klONMF}) for clustering documents (Section~\ref{sec:docu}) and pixels in hyperspectral images (Section~\ref{sec:hsi}). 
The code in MATLAB is available from \url{https://gitlab.com/ngillis/kl-onmf}, and can be used to rerun all experiments presented below. All experiments were run on a LAPTOP Intel(R) Core(TM) i7-8850H CPU @ 2.60GHz 16,0Go RAM.

\paragraph{Initialization} There are many ways to initialize ONMF algorithms, as they are for $k$-means. 
To simplify the presentation, we use the approach proposed in~\cite{gillis2020nonnegative}.  It initializes $W$ with the successive nonnegative projection algorithm (SNPA)~\cite{gillis2014successive} that identifies a subset of $r$ columns  of $X$ that represent well-spread data points in the data set.

\subsection{Document data sets} \label{sec:docu}

We first use ONMF to cluster the 15 document data sets from~\cite{ZG05}. Table~\ref{table:doc} reports the name of the data sets as well as their dimensions ($m$ is the number of words, $n$ the number of documents) and the number of clusters ($r$). 
For each data set, Table~\ref{table:doc} reports the accuracy in percent of the clustering obtained by Fro-ONMF and KL-ONMF, denoted acc-F and acc-KL, respectively. 
Given the true disjoint clusters 
$C_i \subset \{1,2,\dots,n\}$ for $1 \leq i \leq r$ and given a computed disjoint clustering $\{\tilde{C}_i\}_{i=1}^r$, 
its {\em accuracy} is defined as 
\[
\text{accuracy}\left( \{\tilde{C}_i\}_{i=1}^r \right)
\; = \;  
\min_{\pi \in [1,2,\dots,r]} \frac{1}{n} \sum_{i=1}^r |C_i \cap \tilde{C}_{\pi(i)}|,  
\]
where $[1,2,\dots,r]$ is the set of permutations of $\{1,2,\dots,r\}$. The last row of Table~\ref{table:doc} reports the weigthed average accuracy, where the weight is the number of documents to be clustered. 
Table~\ref{table:doc} also reports the average run times in seconds over 20 runs, denoted time-F and time-KL, respectively, 
as well as the number of iterations needed for the two algorithms to converge, denoted it-F and it-KL, respectively. 
For comparison, we also report the best accuracy for 6 ONMF algorithms reported in~\cite{pompili2014two} which we denote acc-best (unfortunately, only for 12 out of the 15 data sets). 

\begin{center} 
\begin{table*}[h!] 
\caption{Fro-ONMF vs.\ KL-ONMF for the clustering of 15  document data sets: 
$m$ is the number of words, 
$n$ is the number of documents, 
$r$ is the number of clusters, 
acc-F and acc-KL are the accuracies in percent for Fro-ONMF and KL-ONMF, resp., 
acc-best is the best results among 6 ONMF algorithms reported in~\cite{pompili2014two},  
time-F and time-KL are the run times in seconds for Fro-ONMF and KL-ONMF, resp., and 
it-F and it-KL are the number of iterations needed to converge for Fro-ONMF and KL-ONMF, resp. The best result between Fro-ONMF and KL-ONMF is highlighted in bold. 
\label{table:doc}} 
\begin{center}  
\begin{tabular}{|c||c|c|c||c|c||c||c|c||c|c|} 
\hline 
& $n$ & $m$ & $r$ & acc-F & acc-KL & acc-best & time-F & time-KL & it-F & it-KL \\ \hline 
\hline 
NG20 & 19949 & 43586 & 20  & 26.5 & \textbf{48.8} & NA & 5.5 & \textbf{2.3}  & 38 & \textbf{34}\\ \hline 
ng3sim & 2998 & 15810 & 3  & 38.0 & \textbf{71.4} &  NA  & 0.3 & \textbf{0.1}  & 24 & \textbf{11}\\ \hline 
classic & 7094 & 41681 & 4  & 55.9 & \textbf{85.4} &  58.8 & 1.1 & \textbf{0.3}  & 97 & \textbf{37}\\ \hline 
ohscal & 11162 & 11465 & 10  & 29.1 & \textbf{47.7} & 39.2  & 1.3 & \textbf{0.6}  & \textbf{31} & 36\\ \hline 
k1b & 2340 & 21819 & 6  & \textbf{74.4} & 58.5 & 79.0  & 0.5 & \textbf{0.1}  & 40 & \textbf{10}\\ \hline 
hitech & 2301 & 10080 & 6  & \textbf{39.6} & 38.5 &  48.7  & 0.4 & \textbf{0.1}  & 28 & \textbf{18}\\ \hline 
reviews & 4069 & 18483 & 5  & 51.4 & \textbf{72.4} & 63.7  & 0.4 & \textbf{0.2}  & 14 & \textbf{12}\\ \hline 
sports & 8580 & 14870 & 7  & 48.9 & \textbf{66.6} &  50.0 & 1.2 & \textbf{0.6}  & \textbf{24} & 29\\ \hline 
la1 & 3204 & 31472 & 6  & 56.2 & \textbf{61.2} & 65.8  & 0.7 & \textbf{0.3}  & 28 & \textbf{25}\\ \hline 
la12 & 6279 & 31472 & 6  & 45.7 & \textbf{67.5} &  NA  & 2.7 & \textbf{0.7}  & 59 & \textbf{29}\\ \hline 
la2 & 3075 & 31472 & 6  & \textbf{62.2} & 59.9 &  52.8 & 0.6 & \textbf{0.2}  & 27 & \textbf{13}\\ \hline 
tr11 & 414 & 6424 & 9  & 50.5 & \textbf{54.1} & 50.2  & 0.1 & \textbf{0.0}  & 19 & \textbf{ 9}\\ \hline 
tr23 & 204 & 5831 & 6  & \textbf{43.1} & 34.3 &  41.2 & {0.0} & 0.0  & \textbf{ 8} & 16\\ \hline 
tr41 & 878 & 7453 & 10  & 44.2 & \textbf{48.6} & 53.2  & 0.2 & \textbf{0.1}  & 25 & \textbf{15}\\ \hline 
tr45 & 690 & 8261 & 10  & 42.2 & \textbf{59.6} & 41.4 & 0.1 & \textbf{0.1}  & 13 & \textbf{10}\\ \hline 
\hline 
Averages &  &  &  & 41.2 & \textbf{59.2} & 54.0 & 1.0 & \textbf{0.4} & 32 & \textbf{20} \\ \hline 
\end{tabular} 
\end{center}
\end{table*}
\end{center} 

We observe the following: 
\begin{itemize}
    \item On average, KL-ONMF provides significantly better clustering, with 59,2\% accuracy compared to 41,2\% for Fro-NMF. However, for some data sets, Fro-NMF sometimes provides significantly better solutions (for k1b and tr23).

    \item Compared to the 6 other ONMF algorithms (which used different initializations), KL-ONMF still performs best on average. 

    \item KL-ONMF requires on average less iterations than Fro-ONMF to converge while its cost per iteration is slightly cheaper (see the discussion in Section~\ref{sec:algoKLonmf}), leading to faster computational times, on average more than 2 times faster. 
    
\end{itemize}

\subsection{Hyperspectral images} \label{sec:hsi}

A hyperspectral image (HSI) can be represented as an $m$-by-$n$ matrix where $m$ is the number of wavelength measured which is typically between 100 and 200, as opposed to 3 bands for color images (red, green and blue), and 
$n$ is the number of pixels. When most pixels in a HSI contain a single material, referred to as an endmember in the HSI literature, it makes sense to cluster them according to the endmember they contain; see, e.g.,~\cite{gillis2014hierarchical} and the references therein. 

Let us compare Fro-ONMF and KL-ONMF on three widely used data sets: Moffet, Samson and Jasper. For these data sets, we have good estimates of the ground-truth endmembers; see~\cite{zhu2017hyperspectral} and the references therein.   
Hence we can evaluate the quality of the endmembers extracted by an ONMF algorithm by comparing it to the ground truth. 
The standard metric to do so is the mean removed spectral angle (MRSA) between two vectors $x \in \mathbb{R}^{m}$ and $y \in \mathbb{R}^{m}$: 
	\[ 
  \text{MRSA}(x,y)=\frac{100}{\pi}\cos^{-1}\left(\frac{(x-\bar{x}e)^\top (y-\bar{y}e)}{\|x-\bar{x}e\|_2 \|y-\bar{y}e\|_2}\right), \]
	where $\bar{x}=\frac{1}{n}\sum_{i=1}^n x(i)$. Note that $\text{MRSA}(x,y) \in [0,100]$,  and the smaller the MRSA, the better approximation $y$ is of $x$.  
 Table~\ref{table:HSImrsa} reports the average MRSA between the columns of the estimated $W$'s  and the ground-truth $W_t$, using an optimal permutation.  
\begin{center} 
\begin{table*}[h!] 
\caption{Fro-ONMF vs.\ KL-ONMF for the clustering of 3  HSIs: 
$m$ is the number of spectral bands, 
$n$ is the number of pixels, 
$r$ is the number of endmembers, 
MRSA-F and MRSA-KL are the average MRSA compared to the ground truth for Fro-ONMF and KL-ONMF, resp., 
time-F and time-KL are the run times in seconds for Fro-ONMF and KL-ONMF, resp., and 
it-F and it-KL are the number of iterations needed to converge for Fro-ONMF and KL-ONMF, resp.  The best result is highlighted in bold. \label{table:HSImrsa}} 
\begin{center}  
\begin{tabular}{|c||c|c|c||c|c||c|c||c|c|} 
\hline  
& $m$ & $n$ & $r$ & MRSA-F & MRSA-KL & time-F & time-KL & it-F & it-KL \\ \hline 
Moffet & 159 & 2500 & 3 & 26.1 
& \textbf{7.77} & \textbf{0.09} & 0.11 & 10 & {10} \\ 
Samson & 156 & 9025 & 3 & \textbf{2.74} & 2.75& 0.22 & 0.22 & 7 & \textbf{6} \\  
Jasper & 198 & 10000 & 4 & 19.6 & \textbf{3.6} & \textbf{0.49} & 0.50  & 22 & \textbf{16} \\ \hline 
\end{tabular} 
\end{center}
\end{table*}
\end{center} 

\newpage 
We observe the following: 
\begin{itemize}
    \item For the Samson data set, Fro-ONMF and KL-ONMF provide very similar results. 

    \item For the Moffet and Jasper data sets, KL-ONMF outperforms Fro-ONMF, with significantly smaller MRSA. The reason is that these two data sets contains an endmember with small norm (that is, a column of $W$ with small norm) corresponding to the water; see Figure~\ref{fig:spectralsign}. 
    Since Fro-NMF tends to favor endmembers with large norms, it is unable to extract the water properly in these two data sets, and hence has a significantly larger MRSA than KL-ONMF. 
    
    Figures~\ref{fig:spectralsignMoffet} and~\ref{fig:spectralsignJasper} display  the clustering of the pixels for the three HSIs for Fro-ONMF and KL-ONMF. We observe in fact that that KL-ONMF perfectly extracts the water on the Moffet and Jasper data sets, while Fro-ONMF completely fails to do so. 


    \item In terms of runtime and number of iterations, Fro-ONMF and KL-ONMF have similar performances. 
\end{itemize}

\begin{figure*}[ht!]
\begin{center}
\includegraphics[width=\textwidth]{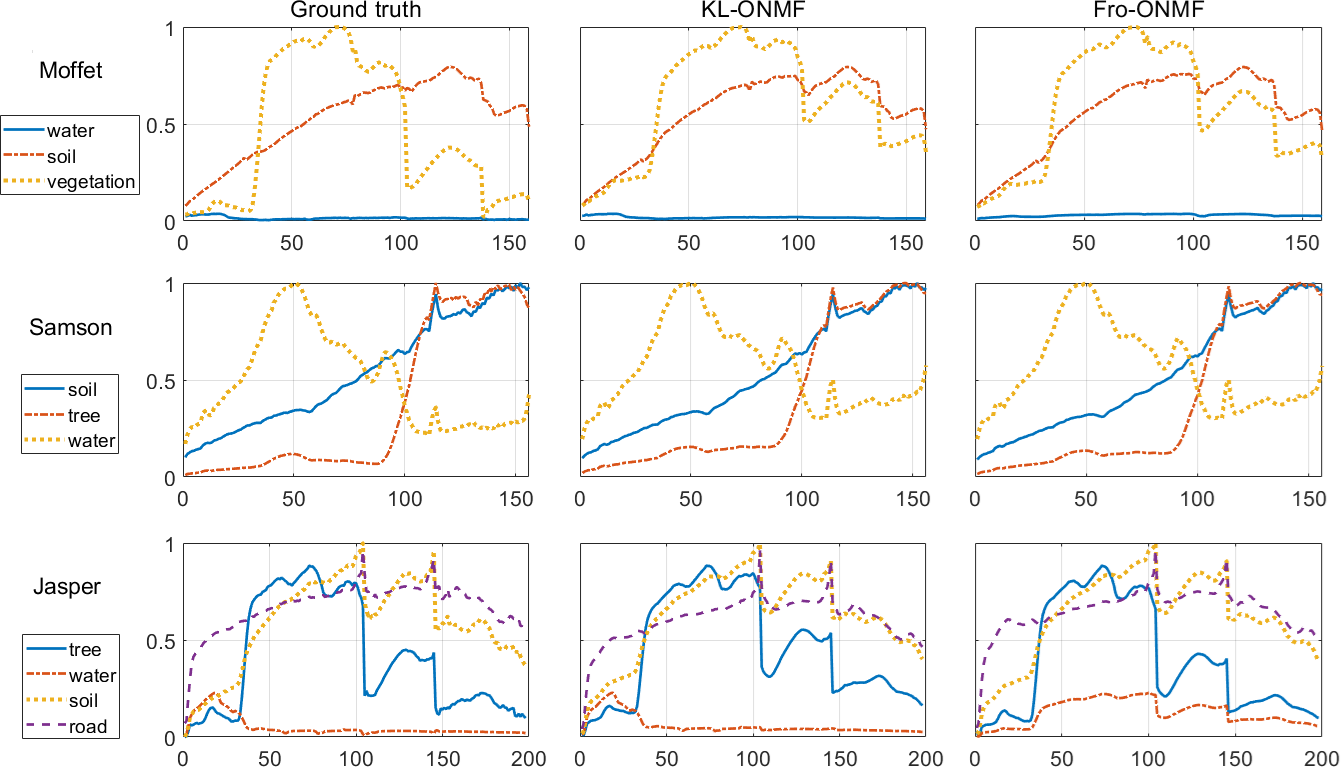} 
\caption{Spectral signatures extracted with KL-ONMF and Fro-ONMF on hyperspectral images (these corresponds to the columns of the computed $W$ factor); from top to bottom: Moffet, Samson and Jasper.} 
\label{fig:spectralsign}
\end{center}
\end{figure*}

\begin{figure*}[ht!]
\begin{center}
\begin{tabular}{ccc}
\textbf{Moffet} & & \textbf{Samson} \vspace{0.2cm} \\ 
 \includegraphics[width=0.4\textwidth]{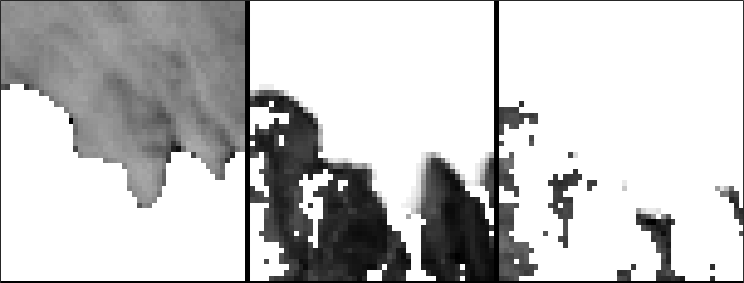} &  & \includegraphics[width=0.4\textwidth]{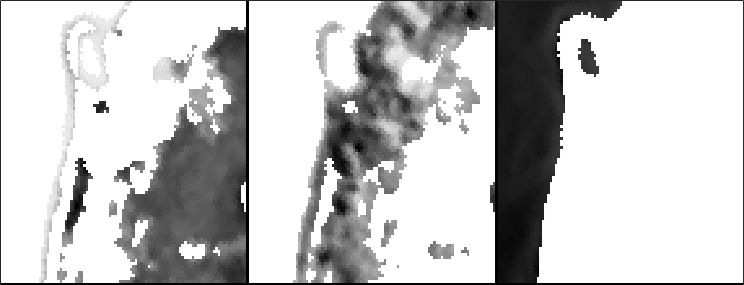}  \\
 Ground truth & & Ground truth    \vspace{0.2cm}    \\ 
 \includegraphics[width=0.4\textwidth]{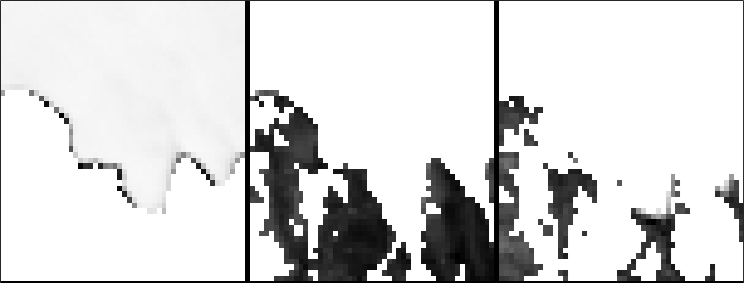} & &  
  \includegraphics[width=0.4\textwidth]{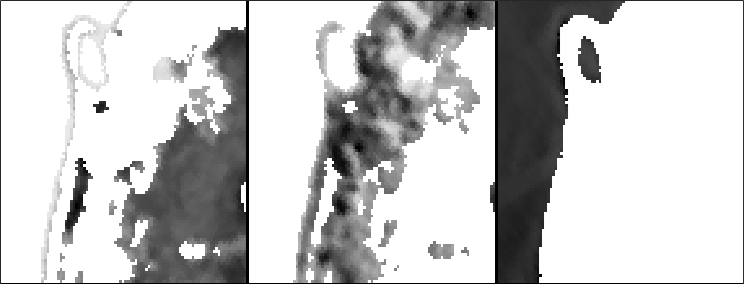} \vspace{-0.1cm} \\ 
  Fro-ONMF & & Fro-ONMF  \vspace{0.2cm}  \\
   \includegraphics[width=0.4\textwidth]{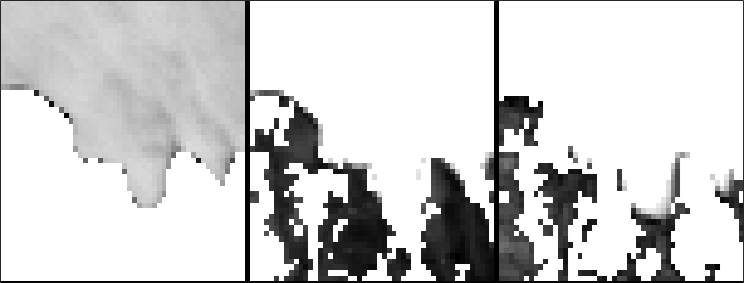}   & & 
   \includegraphics[width=0.4\textwidth]{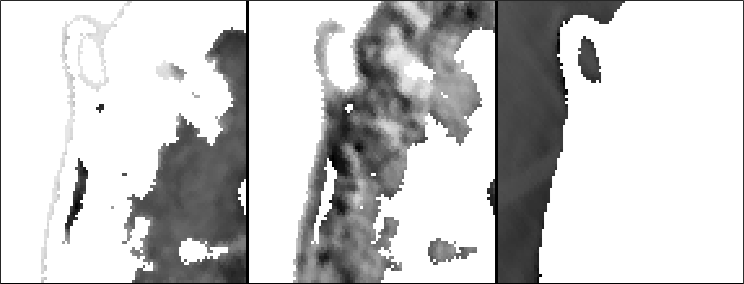}  \\ 
   KL-ONMF & & KL-ONMF 
\end{tabular}
\caption{Clustering of the Moffet HSI (from left to right: water,  soil and vegetation), and the Samson HSI (from left to right: soil, tree, water).  \label{fig:spectralsignMoffet}}  
\end{center}
\end{figure*}

\begin{figure}[ht!]
\begin{center}
\begin{tabular}{c}
 \includegraphics[width=0.48\textwidth]{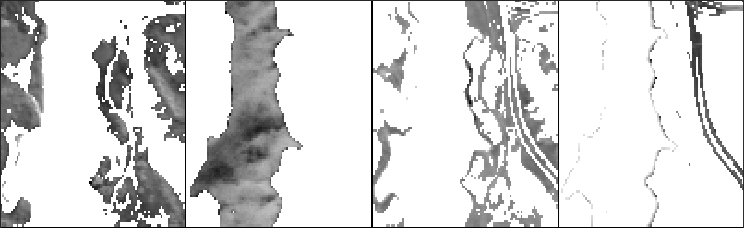}  \\ 
  Ground truth \vspace{0.2cm}    \\ 
 \includegraphics[width=0.48\textwidth]{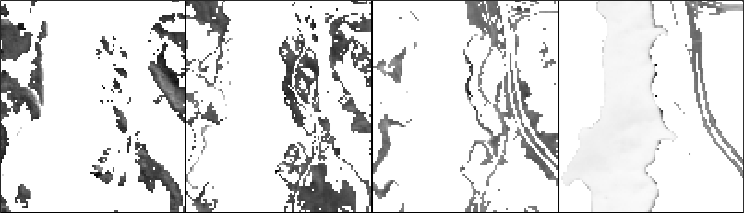}  \\ 
   Fro-NMF  \vspace{0.2cm}  \\
   \includegraphics[width=0.48\textwidth]{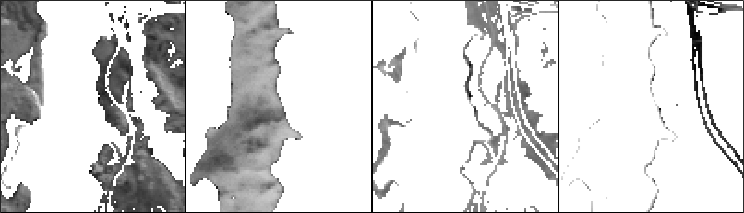}     \\ 
KL-ONMF 
\end{tabular}
\caption{Clustering of the Jasper HSI. From left to right: tree, water, soil and road. \label{fig:spectralsignJasper}} 
\end{center}
\end{figure}


 \section{Conclusion} \label{sec:conclusion}

In this paper, we have proposed a new clustering model for nonnegative data, namely orthogonal NMF with the KL divergence. We designed an alternating optimization algorithm, which is simple but effective and highly scalabe, running in $O(\nnz(X)r)$ operations where  $\nnz(X)$ is the number of non-zero entries in the data matrix, and $r$ is the number of clusters. 
We showed on documents and hyperspectral images that KL-ONMF performs favorably with ONMF with the Frobenius norm, as it provides, on average, better clustering results, while running faster on average.   Further work include the generalization of Algorithm~\ref{alg:klONMF} to any Bregman divergence~\cite{banerjee2005clustering}.

\section*{Acknowledgment} NG acknowledges the support  by the European Union (ERC consolidator, eLinoR, no 101085607).

\bibliographystyle{cas-model2-names}

\bibliography{Article2023}

\begin{thebibliography}{20}
\expandafter\ifx\csname natexlab\endcsname\relax\def\natexlab#1{#1}\fi
\providecommand{\url}[1]{\texttt{#1}}
\providecommand{\href}[2]{#2}
\providecommand{\path}[1]{#1}
\providecommand{\DOIprefix}{doi:}
\providecommand{\ArXivprefix}{arXiv:}
\providecommand{\URLprefix}{URL: }
\providecommand{\Pubmedprefix}{pmid:}
\providecommand{\doi}[1]{\href{http://dx.doi.org/#1}{\path{#1}}}
\providecommand{\Pubmed}[1]{\href{pmid:#1}{\path{#1}}}
\providecommand{\bibinfo}[2]{#2}
\ifx\xfnm\relax \def\xfnm[#1]{\unskip,\space#1}\fi
\bibitem[{Asteris et~al.(2015)Asteris, Papailiopoulos and
  Dimakis}]{asteris2015orthogonal}
\bibinfo{author}{Asteris, M.}, \bibinfo{author}{Papailiopoulos, D.},
  \bibinfo{author}{Dimakis, A.G.}, \bibinfo{year}{2015}.
\newblock \bibinfo{title}{Orthogonal {NMF} through subspace exploration}, in:
  \bibinfo{booktitle}{Advances in Neural Information Processing Systems
  (NIPS)}, pp. \bibinfo{pages}{343--351}.
\bibitem[{Banerjee et~al.(2005)Banerjee, Merugu, Dhillon, Ghosh and
  Lafferty}]{banerjee2005clustering}
\bibinfo{author}{Banerjee, A.}, \bibinfo{author}{Merugu, S.},
  \bibinfo{author}{Dhillon, I.S.}, \bibinfo{author}{Ghosh, J.},
  \bibinfo{author}{Lafferty, J.}, \bibinfo{year}{2005}.
\newblock \bibinfo{title}{Clustering with {B}regman divergences}.
\newblock \bibinfo{journal}{Journal of Machine Learning Research}
  \bibinfo{volume}{6}.
\bibitem[{Choi(2008)}]{Choi08algorithmsfor}
\bibinfo{author}{Choi, S.}, \bibinfo{year}{2008}.
\newblock \bibinfo{title}{Algorithms for orthogonal nonnegative matrix
  factorization}, in: \bibinfo{booktitle}{Proc.\ of the Int.\ Joint Conf.\ on
  Neural Networks}, pp. \bibinfo{pages}{1828--1832}.
\bibitem[{Ding et~al.(2006)Ding, Li, Peng and Park}]{ding2006orthogonal}
\bibinfo{author}{Ding, C.}, \bibinfo{author}{Li, T.}, \bibinfo{author}{Peng,
  W.}, \bibinfo{author}{Park, H.}, \bibinfo{year}{2006}.
\newblock \bibinfo{title}{Orthogonal nonnegative matrix t-factorizations for
  clustering}, in: \bibinfo{booktitle}{Proceedings of the 12th ACM SIGKDD
  international conference on knowledge discovery and data mining},
  \bibinfo{organization}{ACM}. pp. \bibinfo{pages}{126--135}.
\bibitem[{Gillis(2014)}]{gillis2014successive}
\bibinfo{author}{Gillis, N.}, \bibinfo{year}{2014}.
\newblock \bibinfo{title}{Successive nonnegative projection algorithm for
  robust nonnegative blind source separation}.
\newblock \bibinfo{journal}{SIAM Journal on Imaging Sciences}
  \bibinfo{volume}{7}, \bibinfo{pages}{1420--1450}.
\bibitem[{Gillis(2020)}]{gillis2020nonnegative}
\bibinfo{author}{Gillis, N.}, \bibinfo{year}{2020}.
\newblock \bibinfo{title}{Nonnegative matrix factorization}.
\newblock \bibinfo{publisher}{SIAM, Philadelphia}.
\bibitem[{Gillis et~al.(2014)Gillis, Kuang and Park}]{gillis2014hierarchical}
\bibinfo{author}{Gillis, N.}, \bibinfo{author}{Kuang, D.},
  \bibinfo{author}{Park, H.}, \bibinfo{year}{2014}.
\newblock \bibinfo{title}{Hierarchical clustering of hyperspectral images using
  rank-two nonnegative matrix factorization}.
\newblock \bibinfo{journal}{IEEE Transactions on Geoscience and Remote Sensing}
  \bibinfo{volume}{53}, \bibinfo{pages}{2066--2078}.
\bibitem[{Kimura et~al.(2016)Kimura, Kudo and Tanaka}]{kimura2016column}
\bibinfo{author}{Kimura, K.}, \bibinfo{author}{Kudo, M.},
  \bibinfo{author}{Tanaka, Y.}, \bibinfo{year}{2016}.
\newblock \bibinfo{title}{A column-wise update algorithm for nonnegative matrix
  factorization in bregman divergence with an orthogonal constraint}.
\newblock \bibinfo{journal}{Machine learning} \bibinfo{volume}{103},
  \bibinfo{pages}{285--306}.
\bibitem[{Lee and Seung(1999)}]{lee1999learning}
\bibinfo{author}{Lee, D.D.}, \bibinfo{author}{Seung, H.S.},
  \bibinfo{year}{1999}.
\newblock \bibinfo{title}{Learning the parts of objects by non-negative matrix
  factorization}.
\newblock \bibinfo{journal}{Nature} \bibinfo{volume}{401},
  \bibinfo{pages}{788--791}.
\bibitem[{Li et~al.(2014)Li, Zhou and Cichocki}]{li2014two}
\bibinfo{author}{Li, B.}, \bibinfo{author}{Zhou, G.},
  \bibinfo{author}{Cichocki, A.}, \bibinfo{year}{2014}.
\newblock \bibinfo{title}{Two efficient algorithms for approximately orthogonal
  nonnegative matrix factorization}.
\newblock \bibinfo{journal}{IEEE Signal Processing Letters}
  \bibinfo{volume}{22}, \bibinfo{pages}{843--846}.
\bibitem[{Li et~al.(2007)Li, Adal, Wang, Emge, Cichocki and
  Cichocki}]{li2007non}
\bibinfo{author}{Li, H.}, \bibinfo{author}{Adal, T.}, \bibinfo{author}{Wang,
  W.}, \bibinfo{author}{Emge, D.}, \bibinfo{author}{Cichocki, A.},
  \bibinfo{author}{Cichocki, A.}, \bibinfo{year}{2007}.
\newblock \bibinfo{title}{Non-negative matrix factorization with orthogonality
  constraints and its application to {R}aman spectroscopy}.
\newblock \bibinfo{journal}{The Journal of VLSI Signal Processing Systems for
  Signal, Image, and Video Technology} \bibinfo{volume}{48},
  \bibinfo{pages}{83--97}.
\bibitem[{Lucy(1974)}]{lucy1974iterative}
\bibinfo{author}{Lucy, L.B.}, \bibinfo{year}{1974}.
\newblock \bibinfo{title}{An iterative technique for the rectification of
  observed distributions}.
\newblock \bibinfo{journal}{The astronomical journal} \bibinfo{volume}{79},
  \bibinfo{pages}{745}.
\bibitem[{Mirzal(2014)}]{mirzal2014convergent}
\bibinfo{author}{Mirzal, A.}, \bibinfo{year}{2014}.
\newblock \bibinfo{title}{A convergent algorithm for orthogonal nonnegative
  matrix factorization}.
\newblock \bibinfo{journal}{Journal of Computational and Applied Mathematics}
  \bibinfo{volume}{260}, \bibinfo{pages}{149--166}.
\bibitem[{Pompili et~al.(2014)Pompili, Gillis, Absil and
  Glineur}]{pompili2014two}
\bibinfo{author}{Pompili, F.}, \bibinfo{author}{Gillis, N.},
  \bibinfo{author}{Absil, P.A.}, \bibinfo{author}{Glineur, F.},
  \bibinfo{year}{2014}.
\newblock \bibinfo{title}{Two algorithms for orthogonal nonnegative matrix
  factorization with application to clustering}.
\newblock \bibinfo{journal}{Neurocomputing} \bibinfo{volume}{141},
  \bibinfo{pages}{15--25}.
\bibitem[{Richardson(1972)}]{richardson1972bayesian}
\bibinfo{author}{Richardson, W.H.}, \bibinfo{year}{1972}.
\newblock \bibinfo{title}{{Bayesian}-based iterative method of image
  restoration}.
\newblock \bibinfo{journal}{Journal of the Optical Society of America}
  \bibinfo{volume}{62}, \bibinfo{pages}{55--59}.
\bibitem[{Wang et~al.(2019)Wang, Chang, Cui and Pang}]{wang2019clustering}
\bibinfo{author}{Wang, S.}, \bibinfo{author}{Chang, T.H.},
  \bibinfo{author}{Cui, Y.}, \bibinfo{author}{Pang, J.S.},
  \bibinfo{year}{2019}.
\newblock \bibinfo{title}{Clustering by orthogonal non-negative matrix
  factorization: A sequential non-convex penalty approach}, in:
  \bibinfo{booktitle}{IEEE International Conference on Acoustics, Speech and
  Signal Processing (ICASSP)}, \bibinfo{organization}{IEEE}. pp.
  \bibinfo{pages}{5576--5580}.
\bibitem[{Yang and Oja(2010)}]{Yang10linearand}
\bibinfo{author}{Yang, Z.}, \bibinfo{author}{Oja, E.}, \bibinfo{year}{2010}.
\newblock \bibinfo{title}{Linear and nonlinear projective nonnegative matrix
  factorization}.
\newblock \bibinfo{journal}{IEEE Transactions on Neural Networks}
  \bibinfo{volume}{21}, \bibinfo{pages}{734--749}.
\bibitem[{Yoo and Choi(2008)}]{YooChoiOrthogonal2008}
\bibinfo{author}{Yoo, J.}, \bibinfo{author}{Choi, S.}, \bibinfo{year}{2008}.
\newblock \bibinfo{title}{Orthogonal nonnegative matrix factorization:
  Multiplicative updates on stiefel manifolds}, in: \bibinfo{booktitle}{9th
  International Conference on Intelligent Data Engineering and Automated
  Learning}, \bibinfo{publisher}{Springer-Verlag}, \bibinfo{address}{Berlin,
  Heidelberg}. pp. \bibinfo{pages}{140--147}.
\bibitem[{Zhong and Ghosh(2005)}]{ZG05}
\bibinfo{author}{Zhong, S.}, \bibinfo{author}{Ghosh, J.}, \bibinfo{year}{2005}.
\newblock \bibinfo{title}{Generative model-based document clustering: a
  comparative study}.
\newblock \bibinfo{journal}{Knowledge and Information Systems}
  \bibinfo{volume}{8}, \bibinfo{pages}{374--384}.
\bibitem[{Zhu(2017)}]{zhu2017hyperspectral}
\bibinfo{author}{Zhu, F.}, \bibinfo{year}{2017}.
\newblock \bibinfo{title}{Hyperspectral unmixing: ground truth labeling,
  datasets, benchmark performances and survey}.
\newblock \bibinfo{journal}{arXiv preprint arXiv:1708.05125} .

\end{thebibliography}


\end{document}